%
%

\documentclass[11pt,a4paper]{article}
\usepackage[hyperref]{emnlp2020}
\usepackage{times}
\usepackage{latexsym}

\usepackage{graphicx}
\usepackage{amsmath}
\usepackage{amsthm}
\usepackage{booktabs}
\usepackage{algorithm}
\usepackage{algorithmic}
\usepackage{amsfonts,amssymb}
\urlstyle{same}

\usepackage{microtype}

\aclfinalcopy 

\setlength\titlebox{5cm}

\title{AutoETER: Automated Entity Type Representation for Knowledge Graph Embedding}

\author{

Guanglin Niu\textsuperscript{\rm 1},
Bo Li\textsuperscript{\rm 1,2},
Yongfei Zhang\textsuperscript{\rm 1,2,3}\thanks{\ \ Corresponding Author}  ,
Shiliang Pu\textsuperscript{\rm 4},
Jingyang Li\textsuperscript{\rm 1}
\\ 
\textsuperscript{\rm 1}Beijing Key Laboratory of Digital Media, School of Computer Science and Engineering,
\\Beihang University, Beijing 100191, China\\
\textsuperscript{\rm 2}State Key Laboratory of Virtual Reality Technology and Systems, Beihang University,\\Beijing 100191, China
\textsuperscript{\rm 3}Pengcheng Laboratory, Shenzhen 518055, China\\
\textsuperscript{\rm 4}Hikvision Research Institute, Hangzhou 311500, China
\\ 
\{beihangngl, boli, yfzhang, lijingyang\}@buaa.edu.cn, pushiliang.hri@hikvision.com

}

\date{}

\begin{document}
\maketitle
\begin{abstract}

  Recent advances in Knowledge Graph Embedding (KGE) allow for representing entities and relations in continuous vector spaces. Some traditional KGE models leveraging additional type information can improve the representation of entities which however totally rely on the explicit types or neglect the diverse type representations specific to various relations. Besides, none of the existing methods is capable of inferring all the relation patterns of symmetry, inversion and composition as well as the complex properties of 1-N, N-1 and N-N relations, simultaneously. To explore the type information for any KG, we develop a novel KGE framework with \textbf{\underline{Auto}}mated \textbf{\underline{E}}ntity \textbf{\underline{T}}yp\textbf{\underline{E}} \textbf{\underline{R}}epresentation (AutoETER), which learns the latent type embedding of each entity by regarding each relation as a translation operation between the types of two entities with a relation-aware projection mechanism. Particularly, our designed automated type representation learning mechanism is a pluggable module which can be easily incorporated with any KGE model. Besides, our approach could model and infer all the relation patterns and complex relations. Experiments on four datasets demonstrate the superior performance of our model compared to state-of-the-art baselines on link prediction tasks, and the visualization of type clustering provides clearly the explanation of type embeddings and verifies the effectiveness of our model.
\end{abstract}

\section{Introduction}

In recent years, knowledge graph (KG) has been viewed as a powerful technique for recognition systems and prevalent in many fields such as E-commerce, intelligent healthcare, and public security. Knowledge graphs collect and store a great deal of commonsense or domain knowledge in factual triples composed of entity pairs with their relations. The existing large scale KGs such as Freebase \cite{Bollaker:freebase}, WordNet \cite{Miller:WordNet}, YAGO \cite{YAGO3} have shown their validity in various applications, including question answering \cite{WWW:KBQA}, dialogue generation \cite{He:dialogue} and recommender systems \cite{KGAT19}.

\begin{figure}
	\includegraphics[width=1\columnwidth]{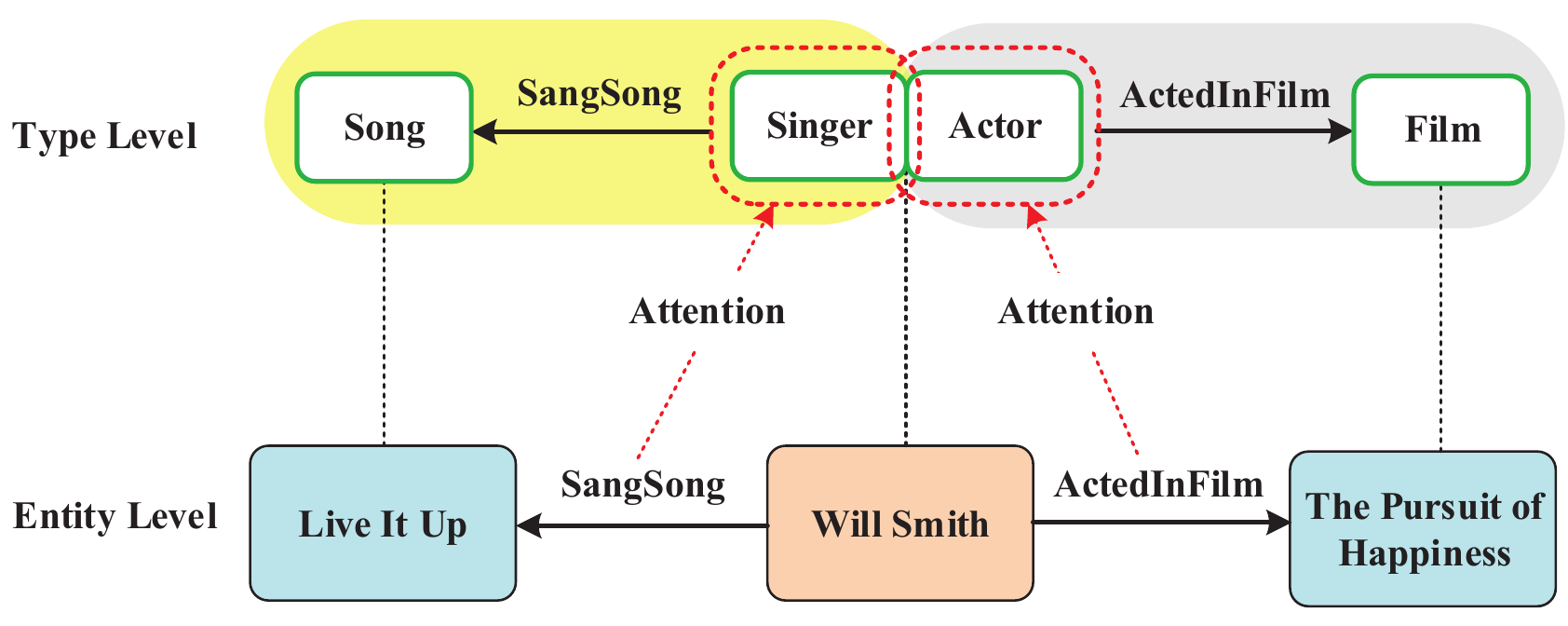}
	\caption{An actual example of the entity-specific triples and the type-specific triples with relation-aware projection mechanism. $Will\ Smith$ has multiple types such as $Singer$ and $Actor$, but only the type $Singer$ should be focused on for the relation $SangSong$.}
	\label{figure1}
\end{figure}

However, the existing KGs are inevitably incomplete whether they are constructed manually or automatically, limiting the effectiveness when exploited for downstream applications. Some existing KG inference approaches such as inductive logic programming algorithm \cite{HAIL}, Markov logic networks based method \cite{pLogicNet} and reinforcement learning-based approach \cite{multi-hop} try to predict entities or relations in KGs but face the limited performance and suffer from the low efficiency. Compared to the above approaches, knowledge graph embedding models could learn the latent representations of the entities and relations and show the best performance on the KG completion task. However, most of the KG embedding models such as TransE \cite{Bordes:TransE} and its variants TransH \cite{Wang:TransH}, TransR \cite{Lin-a:TransR} learn KG embeddings relying on single triples, which simply exploit the structure information implied in KGs.

Entity types define categories of entities that are valid to enhance the representation of entities. In many type-embodied models such as TKRL \cite{Xie:TKRL} and TransT \cite{TransT}, the explicit types are necessary while some KGs (i.e., WordNet) lack them, which limits the versatility of these approaches. JOIE \cite{JOIE} jointly encodes both the ontology and instance views of KGs. Nevertheless, ontologies' concepts always represent the general categories of entities but cannot reflect the specific types, primarily associated with different relations. Jain \cite{Type-sensitive} learned the type embeddings by defining the compatibility between an entity type and a relation. Still, it ignores the semantics implied in a whole triple consisting of a relation jointly with its linked two entity types. Moreover, all the previous type-based approaches neglect the diversity of entity type representations specific to various relations. As Figure \ref{figure1} shows, contrary to the previous researches considering entity types, the triples in the entity level could be extended to triples in the type level. Each entity has multiple types, and diverse types should be focused on for different specific relations.

Additionally, some models embed the entities and relations into the complex vector space instead of the frequently-used real space to improve the capability of representation learning, including ComplEx \cite{Trouillon:ComplEx} and RotatE \cite{RotatE}. Nevertheless, none of the existing embedding models could model and infer all the relation patterns and the complex 1-N, N-1 and N-N relations, simultaneously.

To conduct the KG inference from the perspectives of both entity-specific triples and type-specific triples on any KG, whether the explicit types exist, we propose AutoETER to automatically learn the diverse type representations of each entity when focusing on the various associated relations. Intuitively, the high-dimensional entity embeddings imply the individual features to distinguish the diverse entities. In contrast, the low-dimensional type embeddings capture the general features to discover the similarity of entities according to their categories. Inspired by the translational-based principle in TransE, we expect that given a head entity and its associated relation, the tail entity's type representation can be obtained by $\textbf{type}_{head} + \textbf{relation} = \textbf{type}_{tail}$. Particularly, the latent type embeddings of two head or two tail entities focused on the same relation should be close to each other since they imply the same type. Furthermore, the embeddings of the entity-specific triples and the type-specific triples are capable of modeling and inferring symmetry, inversion, composition, and complex 1-N, N-1, N-N relations.

The contributions of this work are summarized as follows:

\begin{itemize}
\item We model type representations to enrich the general features of entities. A novel model AutoETER is proposed to learn the embeddings of entities, relations and entity types from entity-specific triples and type-specific triples without explicit types in KGs. Furthermore, the type embeddings can be incorporated with the entity embeddings for inference.

\item To the best of our knowledge, we are the first to model and infer all the relation patterns, including symmetry, inversion and composition, as well as complex relations of 1-N, N-1 and N-N for the KG inference.

\item We conduct extensive experiments on link prediction on four real-world benchmark datasets. The evaluation results demonstrate the superiority of our proposed model over other state-of-the-art algorithms. The visualization of clustering type embeddings validates the effectiveness of automatically representing entity types with relation-aware projection.
\end{itemize}

\section{Related Works}

\subsection{Knowledge Graph Inference}

To address the inherent incompleteness of KGs, multiple KG inference methods are investigated and have made significant progress. Traditional researches devote to generate logic rules based on inductive logic programming such as HAIL \cite{HAIL} to predict the missing entities in KGs. However, employing logic rules in KG inference limits generalization performance. Path ranking algorithm (PRA) \cite{Lao:PRA} extracts the relational path features based on random-walk to infer the relationships between entity pairs. DeepPath \cite{multi-hop} is a foundational approach that formulates the multi-hop reasoning as a Markov decision process and leverages reinforcement learning (RL) to find paths in KGs. However, the RL-based multi-hop KG reasoning approaches consume much time in searching paths.

\subsection{KG Embedding Models}

Various KG embedding models have been extensively developed for KG inference in recent years \cite{TKDE:survey}. KGE models are capable of capturing latent representations of entities and relations in KGs independently from hand-crafted rules, and they have shown a strong capacity of efficient computation in many knowledge-aware applications \cite{SurveyKG}. TransE \cite{Bordes:TransE} is the foundational translation-based method, which regards a relation as a translation operation from the head entity to the tail entity. Along with TransE, multiple variants are proposed to improve the embedding performance of KGs \cite{RPJE20,TransGate,ManifoldE}. ConvE \cite{Dettmers:CNN} is a typical method representing entities and relations based on convolutional neural networks (CNN). Another category of KG embedding contains many tensor decomposition models, including DisMult \cite{Yang:ICLR}. Particularly, ComplEx \cite{Trouillon:ComplEx} extends DisMult to learn the KG embeddings in the complex space. RotatE \cite{RotatE} defines a relation as a rotation from source to target entities in a complex space but cannot infer the complex relations 1-N, N-1 and N-N. What's more, all the approaches above purely depend on the triples directly observed in KGs.

\subsection{Models Incorporating Entity Types}

To further improve the performance of KG embedding, various auxiliary information is introduced, such as paths \cite{Lin-b:PTransE,RPJE20}, graph structure \cite{Michael:GNN} and entity types \cite{Xie:TKRL,Krompab:ISWC,TransT}. Among such information, entity types contain less noise and are appropriate for providing more general semantics for each entity. TKRL \cite{Xie:TKRL} projects each entity with the type-specific projection matrices. TransT \cite{TransT} measures the semantic similarity of entities and relations utilizing types. However, all the above type-based KG embedding models require the supervision of explicit types and cannot work on KGs without explicit types. JOIE \cite{JOIE} links entities to their concepts in the ontology for jointly embed the instance-view graph and the ontology-view graph, but the concepts in ontologies provide too broad or even noisy information to represent the specific and precise types of each entity. \cite{Type-sensitive} introduces the compatibility between the embeddings of an entity type and a relation for link prediction. Still, all the existing type-enhanced models neglect that an entity's diverse types should be focused on when this entity is associated with various relations. Meanwhile, the association property implied in the embeddings of the type-specific triples has not been well modeled.

\begin{figure*}
	\centering
	\includegraphics[width=0.99\linewidth]{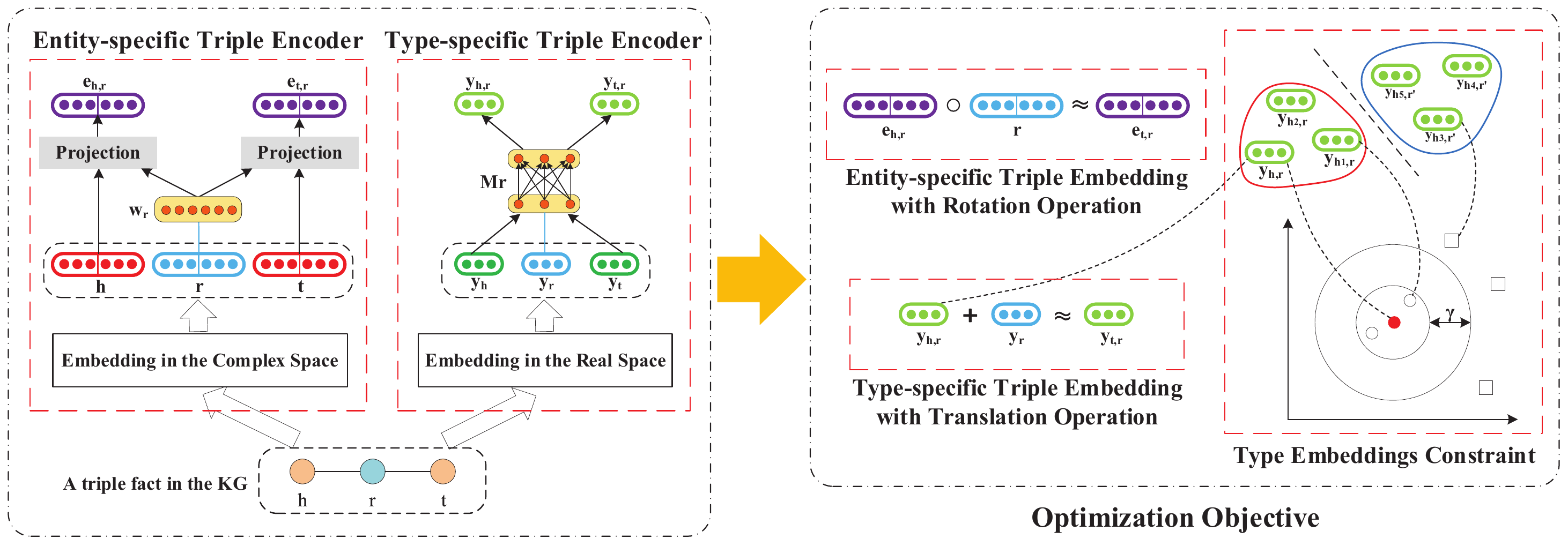}
	\caption{The architecture of AutoETER. Given a triple fact $(h,r,t)$, $\textbf{e}_{h,r}$ and $\textbf{e}_{t,r}$ are the projected entity embeddings in the hyper-plane of relation $r$, $\textbf{y}_{h,r}$ and $\textbf{y}_{t,r}$ are type embeddings focusing on relation $r$. Furthermore, we expect the embeddings of entity-specific triple satisfies rotation operation and type-specific triple satisfies translation operation from head to tail entities. Type embeddings associated with the same relation $r$ are constrained to be closer, where $\gamma$ is the margin enforced between two clusters of type embeddings related to different relations.}
	\label{figure2}
\end{figure*}

\section{AutoETER: KGE with Automated Entity Type Representation}

To cope with the above limitations, we describe the proposed model AutoETER, which aims to automatically learn a variant of type representations semantically compatible with various relations and infer all the relation patterns and complex relations. As figure \ref{figure2} shows, we first embed the entities and relations into complex space via the entity-specific triple encoder with a hyper-plane projection strategy (§\ref{section3.2}). Additionally, the type-specific triple encoder is developed to learn type embeddings incorporated with a relation-aware projection mechanism (§\ref{section3.3}). Meanwhile, the type embeddings are constrained by their similarity derived from the associated relations (§\ref{section3.4}). Afterward, we propose the overall optimization objective with both entity-specific triple and type-specific triple representations and the similarity constraint of the type embeddings (§\ref{section3.5}).

\subsection{Entity-specific Triple Encoder}
\label{section3.2}

We embed the entities and relations into the complex space and regard a relation as the rotation operation from the head entity to the tail entity as in RotatE \cite{RotatE}. To further model and infer the complex relations such as 1-N, N-1 and N-N, we project entities into their associated relation hyper-planes to ensure each entity has various representations concerning the specific relations. In terms of an entity-specific triple $(h, r, t)$, the energy function $E_1(h,r,t)$ is defined as
\begin{align}
    \textbf{e}_{h,r}&=\textbf{h} - \textbf{h}^\top \textbf{w}_r \textbf{h},\ \ \textbf{e}_{t,r}=\textbf{t} - \textbf{t}^\top \textbf{w}_r \textbf{t} \label{eq1} \\
    &E_1(h,r,t)=\Vert \textbf{e}_{h,r} \circ \textbf{r}-\textbf{e}_{t,r} \Vert
    \label{eq2}
\end{align}
where $\textbf{h}\in\mathbb{C}^k$, $\textbf{t}\in\mathbb{C}^k$, $\textbf{r}\in\mathbb{C}^k$ are the embeddings of head entity $h$, tail entity $t$ and relation $r$ in the complex space with dimension $k$. $\textbf{w}_r\in\mathbb{R}^{k}$ denotes the normal vector of the hyper-plane involved in the relation $r$. $\textbf{e}_{h,r}\in\mathbb{C}^{k}$ and $\textbf{e}_{t,r}\in\mathbb{C}^{k}$ represent the entity embeddings of $h$ and $t$ projected in the hyper-plane $\textbf{w}_r$. $\circ$ is the Hadamard product.

On account of the embeddings of entity-specific triples, our model can infer all the relation patterns via the rotation operation from head to tail entities as in RotatE. Particularly, $\textbf{r}$ is constrained to be $|r_i|=1$, $i=1,2,...,k$ for inferring the symmetric relation pattern and at least one element of $\textbf{r}$ is -1 to ensure the diverse representations of head and tail entities. Moreover, the projection operation shown in Eq. \ref{eq1} enables our model to infer the complex relations via various representations of entities regarding different relations.

\subsection{Type-specific Triple Encoder}
\label{section3.3}

Given an entity $e$ and its associated relation $r$ in a triple, we aim to learn the type and relation embeddings with a relation-aware projection mechanism to output the most important information of the type representations:
\begin{equation}
  f_{att}(e,r)=\text{M}_r\textbf{y}_e
  \label{eq3}
\end{equation}
where $\textbf{y}_e\in\mathbb{R}^d$ denotes the type embedding of entity $e$ in the real space with dimension $d$. $\textbf{M}_r\in\mathbb{R}^{d\times d}$ is defined as the projection weight matrix associated with the relation $r$, which could automatically select the latent information of each type embedding most relevant to the relation $r$.

With the relation-aware projection defined in Eq. \ref{eq3}, the energy function involved in type-specific triples is defined as
\begin{equation}
  \begin{split}
   \textbf{y}_{h,r}=f_{att}(h,r),\ \ \textbf{y}_{t,r}=f_{att}(t,r) \\
   E_2(h,r,t)=\Vert \textbf{y}_{h,r} + \textbf{y}_{r} - \textbf{y}_{t,r}\Vert
 \end{split}
   \label{eq4}
\end{equation}
where $\textbf{y}_{h,r}\in \mathbb{R}^d$, $\textbf{y}_{t,r}\in \mathbb{R}^d$ are the type embeddings of entities $h$ and $t$ both focusing on the relation $r$ and $\textbf{y}_{r}\in \mathbb{R}^d$ denotes the embedding of the relation $r$ in the type-specific triple. In terms of the energy function in Eq. \ref{eq4}, we expect that
\begin{equation}
  \textbf{y}_{h,r} + \textbf{y}_{r} = \textbf{y}_{t,r}
  \label{eq5}
\end{equation}

Furthermore, with the type and relation embeddings learned in the real spaces, our model cost fewer parameters and could model and infer all the relation patterns including symmetry (Lemma \ref{lemma1}), inversion (Lemma \ref{lemma2}) and composition (Lemma \ref{lemma3}) as well as the complex properties of relations:
\newtheorem{lemma}{Lemma}
\begin{lemma} \label{lemma1}
Our model could infer relation pattern of symmetry by type-specific triple embeddings.
\end{lemma}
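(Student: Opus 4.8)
The plan is to show that a relation $r$ can be represented as symmetric exactly when its type-specific relation embedding satisfies $\textbf{y}_r = \textbf{0}$, mirroring the standard translational argument for symmetry in TransE-style models. Recall from Eq. \ref{eq5} that a type-specific triple $(h,r,t)$ is treated as valid (zero energy $E_2$) precisely when $\textbf{y}_{h,r} + \textbf{y}_r = \textbf{y}_{t,r}$, where by Eq. \ref{eq3} and Eq. \ref{eq4} we have $\textbf{y}_{h,r} = \textbf{M}_r \textbf{y}_h$ and $\textbf{y}_{t,r} = \textbf{M}_r \textbf{y}_t$. Since the relation-aware projection $\textbf{M}_r$ depends only on $r$, it is identical for the two orientations $(h,r,t)$ and $(t,r,h)$; this shared projection is the key structural feature that makes the argument go through.

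First I would write out the validity conditions for both orientations of a symmetric relation. For $(h,r,t)$ we require $\textbf{M}_r \textbf{y}_h + \textbf{y}_r = \textbf{M}_r \textbf{y}_t$, and for its reverse $(t,r,h)$ we require $\textbf{M}_r \textbf{y}_t + \textbf{y}_r = \textbf{M}_r \textbf{y}_h$. Adding the two equations cancels the projected type terms $\textbf{M}_r \textbf{y}_h$ and $\textbf{M}_r \textbf{y}_t$ and leaves $2\textbf{y}_r = \textbf{0}$, yielding the necessary condition $\textbf{y}_r = \textbf{0}$.

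For the converse, I would substitute $\textbf{y}_r = \textbf{0}$ back into the two conditions; both collapse to the single identity $\textbf{M}_r \textbf{y}_h = \textbf{M}_r \textbf{y}_t$, so whenever $(h,r,t)$ attains zero energy $E_2$ its reverse $(t,r,h)$ attains zero energy as well, and symmetrically. This establishes that setting $\textbf{y}_r = \textbf{0}$ lets the model assign identical scores to a triple and its reverse, i.e., the type-specific encoder can model and infer the symmetry pattern.

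The argument is elementary linear algebra, so I do not expect a genuine obstacle. The only points demanding care are to justify explicitly that $\textbf{M}_r$ is shared across both directions (so that the two constraints are mutually compatible rather than over-determined), and to present both the necessity and the sufficiency of $\textbf{y}_r = \textbf{0}$ rather than merely exhibiting it as one feasible solution.
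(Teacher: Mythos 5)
Your proposal is correct and follows essentially the same route as the paper: write the translational constraint of Eq.~\ref{eq5} for both orientations $(h,r,t)$ and $(t,r,h)$, cancel the projected type terms to conclude $\textbf{y}_r = \textbf{0}$ and $\textbf{y}_{h,r}=\textbf{y}_{t,r}$. Your explicit check of the converse (that $\textbf{y}_r=\textbf{0}$ suffices to score a triple and its reverse identically) is a small but welcome addition the paper leaves implicit.
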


\begin{proof}
If a relation $r$ is symmetric, two triples $(h,r,t)$ and $(t,r,h)$ will hold. From Eq. \ref{eq5}, the correlations among the embeddings of types and relations can be obtained as:
\begin{align}
  \textbf{y}_{h,r} + \textbf{y}_{r} = \textbf{y}_{t,r},\ \ \textbf{y}_{t,r} + \textbf{y}_{r} = \textbf{y}_{h,r}\label{eq6}
\end{align}
From Eq. \ref{eq6}, we can further derive that
\begin{equation}
  \textbf{y}_{h,r}=\textbf{y}_{t,r},\ \ \textbf{y}_{r}=\textbf{0}
  \label{eq8}
\end{equation}
We prove that the embedding of a symmetric relation should be zero vector, and the type embeddings of head and tail entities should be equal. The above results are reasonable owing to the focused types of two entities linked by the symmetric relation are supposed to be same.
\end{proof}

\begin{lemma} \label{lemma2}
Our model is able to infer relation pattern of inversion by type-specific triple embeddings.
\end{lemma}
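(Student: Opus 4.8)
The plan is to mirror the structure of the symmetry argument (Lemma~\ref{lemma1}) but to track two distinct relations throughout. By definition, if $r_1$ is the inverse of $r_2$, then whenever $(h, r_1, t)$ holds so does $(t, r_2, h)$. First I would instantiate the translation principle of Eq.~\ref{eq5} on each triple, unfolding the relation-aware projection from Eq.~\ref{eq3} so that $\textbf{y}_{h,r}=\text{M}_r\textbf{y}_h$ is visible:
\begin{align}
    \text{M}_{r_1}\textbf{y}_{h} + \textbf{y}_{r_1} = \text{M}_{r_1}\textbf{y}_{t}, \quad \text{M}_{r_2}\textbf{y}_{t} + \textbf{y}_{r_2} = \text{M}_{r_2}\textbf{y}_{h}. \nonumber
\end{align}
The target conclusion is the relationship between the two relation embeddings, $\textbf{y}_{r_1} = -\textbf{y}_{r_2}$, which is the additive analogue of the inverse-rotation (conjugate) condition that governs the entity-specific encoder.

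Next I would add the two equations to eliminate the entity-type terms. Writing $\textbf{d} = \textbf{y}_{t} - \textbf{y}_{h}$, the sum rearranges to $\textbf{y}_{r_1} + \textbf{y}_{r_2} = (\text{M}_{r_1} - \text{M}_{r_2})\,\textbf{d}$, so the clean conclusion $\textbf{y}_{r_1} = -\textbf{y}_{r_2}$ follows exactly when the residual $(\text{M}_{r_1} - \text{M}_{r_2})\,\textbf{d}$ vanishes---in particular when the two inverse relations share the same relation-aware projection, $\text{M}_{r_1} = \text{M}_{r_2}$. I would argue this identification is the semantically correct one: inverse relations connect the very same pair of entity categories, merely in opposite directions, so they ought to focus on the same latent type coordinates and hence be assigned a common projection matrix.

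The main obstacle is precisely this projection term. In Lemma~\ref{lemma1} a single relation is reused in both triples, so its matrix $\text{M}_r$ cancels automatically; here the two matrices $\text{M}_{r_1}$ and $\text{M}_{r_2}$ are a priori independent and the cancellation no longer comes for free. I would resolve it either by justifying $\text{M}_{r_1} = \text{M}_{r_2}$ from the shared-type-categories reading above, or, if one prefers to stay fully general, by treating $\textbf{y}_{r_1} = -\textbf{y}_{r_2}$ as the condition the optimization drives the embeddings toward, noting that the residual is annihilated as the type difference $\textbf{d}$ aligns with the common null space of $\text{M}_{r_1} - \text{M}_{r_2}$. Either route delivers the inversion-characterizing identity between the relation embeddings.
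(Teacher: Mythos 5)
Your setup is the same as the paper's: instantiate Eq.~\ref{eq5} on the two triples $(h,r_1,t)$ and $(t,r_2,h)$ with the projections from Eq.~\ref{eq3} unfolded, then extract a relationship between $\textbf{y}_{r_1}$ and $\textbf{y}_{r_2}$. Where you diverge is in how you handle the two a priori independent projection matrices. You add the equations, obtain $\textbf{y}_{r_1}+\textbf{y}_{r_2}=(\textbf{M}_{r_1}-\textbf{M}_{r_2})\textbf{d}$, and then need the extra hypothesis $\textbf{M}_{r_1}=\textbf{M}_{r_2}$ (or a null-space alignment of $\textbf{d}$) to reach the clean identity $\textbf{y}_{r_1}=-\textbf{y}_{r_2}$. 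The paper instead sidesteps this by positing a transform matrix $\textbf{P}$ with $\textbf{M}_{r_1}=\textbf{P}\textbf{M}_{r_2}$, left-multiplying the second equation by $\textbf{P}$ so that both equations live in the image of $\textbf{M}_{r_1}$, and concluding the more general characterization $\textbf{y}_{r_1}=-\textbf{P}\textbf{y}_{r_2}$; your version is exactly the special case $\textbf{P}=\textbf{I}$. Both arguments establish the lemma in the sense these lemmas are meant (there exists a parameter configuration capturing inversion), and your shared-projection reading is semantically defensible, but it imposes a constraint on the model that the paper does not need: by keeping $\textbf{P}$ general, the paper lets the two inverse relations retain distinct relation-aware projections and still expresses the inversion pattern as a linear relation between their embeddings. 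Note also that the existence of $\textbf{P}$ implicitly assumes something like invertibility of $\textbf{M}_{r_2}$, a point the paper leaves unstated, so neither derivation is fully assumption-free.
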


\begin{proof}
With the inverse relations $r_1$ and $r_2$, two triples $(h,r_1,t)$ and $(t,r_2,h)$ hold. From Eqs. \ref{eq3}, \ref{eq4} and \ref{eq5}, it can be retrieved that
\begin{align}
  \textbf{M}_{r_1}\textbf{y}_{h} + \textbf{y}_{r_1} &= \textbf{M}_{r_1}\textbf{y}_{t} \label{eq11} \\
  \textbf{M}_{r_2}\textbf{y}_{t} + \textbf{y}_{r_2} &= \textbf{M}_{r_2}\textbf{y}_{h} \label{eq12}
\end{align}
We can define a transform matrix $\textbf{P}\in \mathbb{R}^{d\times d}$ that satisfies
\begin{equation}
  \textbf{M}_{r_1}=\textbf{P}\textbf{M}_{r_2}
  \label{eq13}
\end{equation}
Substituting Eq. \ref{eq13} into Eq. \ref{eq12}, the latter can be modified as
\begin{equation}
  \textbf{M}_{r_1}\textbf{y}_{t} + \textbf{P}\textbf{y}_{r_2} = \textbf{M}_{r_1}\textbf{y}_{h}
  \label{eq14}
\end{equation}
Then, substituting Eq. \ref{eq14} into Eq. \ref{eq11}, it yields that
\begin{equation}
  \textbf{y}_{r_1} = -\textbf{P}\textbf{y}_{r_2}
  \label{eq16}
\end{equation}
We can model and infer the inverse relations with the relation embeddings satisfying the relationship as in Eq. \ref{eq16}.
\end{proof}

\begin{lemma} \label{lemma3}
Our model is capable of inferring the relations of composition by type-specific triple embeddings.
\end{lemma}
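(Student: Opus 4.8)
The plan is to follow the same template as the inversion argument in Lemma~\ref{lemma2}, now tracking three triples that realize the composition pattern. If relation $r_3$ is the composition of $r_1$ and $r_2$, then whenever $(h,r_1,m)$ and $(m,r_2,t)$ hold, the triple $(h,r_3,t)$ must hold as well. First I would write the three type-specific constraints obtained from Eqs.~\ref{eq3}, \ref{eq4} and \ref{eq5}, namely $\textbf{M}_{r_1}\textbf{y}_{h}+\textbf{y}_{r_1}=\textbf{M}_{r_1}\textbf{y}_{m}$, $\textbf{M}_{r_2}\textbf{y}_{m}+\textbf{y}_{r_2}=\textbf{M}_{r_2}\textbf{y}_{t}$, and $\textbf{M}_{r_3}\textbf{y}_{h}+\textbf{y}_{r_3}=\textbf{M}_{r_3}\textbf{y}_{t}$. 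Rearranging each one isolates the relation embedding as a projected type difference: $\textbf{y}_{r_1}=\textbf{M}_{r_1}(\textbf{y}_{m}-\textbf{y}_{h})$, $\textbf{y}_{r_2}=\textbf{M}_{r_2}(\textbf{y}_{t}-\textbf{y}_{m})$, and $\textbf{y}_{r_3}=\textbf{M}_{r_3}(\textbf{y}_{t}-\textbf{y}_{h})$.

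Next, mirroring the transform matrix of Eq.~\ref{eq13}, I would introduce two matrices $\textbf{P}_1,\textbf{P}_2\in\mathbb{R}^{d\times d}$ satisfying $\textbf{M}_{r_3}=\textbf{P}_1\textbf{M}_{r_1}$ and $\textbf{M}_{r_3}=\textbf{P}_2\textbf{M}_{r_2}$, which align the composed relation's projection with those of its two factors. The decisive step is the telescoping identity $\textbf{y}_{t}-\textbf{y}_{h}=(\textbf{y}_{t}-\textbf{y}_{m})+(\textbf{y}_{m}-\textbf{y}_{h})$: substituting it into the expression for $\textbf{y}_{r_3}$, distributing $\textbf{M}_{r_3}$ through the two transform matrices, and recognizing the projected type differences as $\textbf{y}_{r_1}$ and $\textbf{y}_{r_2}$ yields the target relationship $\textbf{y}_{r_3}=\textbf{P}_1\textbf{y}_{r_1}+\textbf{P}_2\textbf{y}_{r_2}$. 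This exhibits the composed relation embedding as a matrix-weighted combination of its constituents, establishing that the type-specific encoder can represent and infer composition.

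I expect the main obstacle to be justifying the existence of the transform matrices $\textbf{P}_1$ and $\textbf{P}_2$, exactly as in Lemma~\ref{lemma2} where $\textbf{P}$ is simply posited. Strictly, this requires the row space of $\textbf{M}_{r_3}$ to lie within those of $\textbf{M}_{r_1}$ and $\textbf{M}_{r_2}$; it suffices, for instance, that $\textbf{M}_{r_1}$ and $\textbf{M}_{r_2}$ be invertible, giving $\textbf{P}_1=\textbf{M}_{r_3}\textbf{M}_{r_1}^{-1}$ and $\textbf{P}_2=\textbf{M}_{r_3}\textbf{M}_{r_2}^{-1}$. I would state this as the same modeling assumption already invoked for inversion, so that the composition conclusion follows on equal footing, and then remark that the derived constraint is consistent with the intuition that a composed relation translates types by the accumulated translations of its factors.
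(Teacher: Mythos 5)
Your proposal is correct and follows essentially the same route as the paper's proof: the same three type-specific constraints, the same pair of transform matrices (your $\textbf{P}_1,\textbf{P}_2$ are the paper's $\textbf{P},\textbf{Q}$ defined by $\textbf{P}\textbf{M}_{r_1}=\textbf{M}_{r_3}$ and $\textbf{Q}\textbf{M}_{r_2}=\textbf{M}_{r_3}$), and the same conclusion $\textbf{y}_{r_3}=\textbf{P}\textbf{y}_{r_1}+\textbf{Q}\textbf{y}_{r_2}$, with your telescoping of $\textbf{y}_t-\textbf{y}_h$ being only a cosmetic reorganization of the paper's chained substitution. Your added caveat about when $\textbf{P}$ and $\textbf{Q}$ actually exist (e.g.\ invertibility of $\textbf{M}_{r_1},\textbf{M}_{r_2}$) is a point the paper silently assumes, and is worth stating.
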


\begin{proof}
  On account of the relations of composition pattern $r_3(a,c)\Leftarrow r_1(a,b) \wedge r_2(b,c)$, the corresponding triples $(a,r_1,b)$, $(b,r_2,c)$ and $(a,r_3,c)$ hold. Meanwhile, considering Eqs. \ref{eq3}, \ref{eq4} and \ref{eq5}, it can be obtained that
  \begin{align}
    \textbf{M}_{r_1}\textbf{y}_{a} + \textbf{y}_{r_1} = \textbf{M}_{r_1}\textbf{y}_{b} \label{eq17} \\
    \textbf{M}_{r_2}\textbf{y}_{b} + \textbf{y}_{r_2} = \textbf{M}_{r_2}\textbf{y}_{c} \label{eq18} \\
    \textbf{M}_{r_3}\textbf{y}_{a} + \textbf{y}_{r_3} = \textbf{M}_{r_3}\textbf{y}_{c} \label{eq19}
  \end{align}
We can define two transform matrices $\textbf{P}\in \mathbb{R}^{d\times d}$ and $\textbf{Q}\in \mathbb{R}^{d\times d}$ to satisfy
\begin{align}
  \textbf{P}\textbf{M}_{r_1}=\textbf{M}_{r_3} \label{eq20} \\
  \textbf{Q}\textbf{M}_{r_2}=\textbf{M}_{r_3} \label{eq21}
\end{align}
Substituting Eq. \ref{eq20} into Eq. \ref{eq17} and Eq. \ref{eq21} into Eq. \ref{eq18}, respectively, we can derive that
\begin{align}
  \textbf{M}_{r_3}\textbf{y}_{a} + \textbf{P}\textbf{y}_{r_1} = \textbf{M}_{r_3}\textbf{y}_{b} \label{eq22} \\
  \textbf{M}_{r_3}\textbf{y}_{b} + \textbf{Q}\textbf{y}_{r_2} = \textbf{M}_{r_3}\textbf{y}_{c} \label{eq23}
\end{align}
Substituting Eq. \ref{eq22} into Eq. \ref{eq23}, it can be retrieved that
\begin{equation}
  \textbf{M}_{r_3}\textbf{y}_{a} + \textbf{P}\textbf{y}_{r_1} + \textbf{Q}\textbf{y}_{r_2} = \textbf{M}_{r_3}\textbf{y}_{c}
  \label{eq24}
\end{equation}
Combining Eqs. \ref{eq19} and \ref{eq24}, we can model the correlation among the relation embeddings of composition pattern as
\begin{equation}
  \textbf{y}_{r_3} = \textbf{P}\textbf{y}_{r_1} + \textbf{Q}\textbf{y}_{r_2}
  \label{eq25}
\end{equation}
We prove that we can model and infer the relations of composition pattern for type-specific triples with the relation embeddings as shown in Eq. \ref{eq25}.
\end{proof}

Specific to the inference on type-specific triples with the relations of complex properties 1-N, N-1 and N-N, we could exploit the various representations of an entity type associated with different relations via the relation-aware projection mechanism defined in Eq. \ref{eq3} to infer on these relations.

\subsection{Type Embeddings Similarity Constraint}
\label{section3.4}

In addition to learning type embeddings by the type-specific triple encoder (§\ref{section3.3}), the type representations should be constrained by the similarity between the entity types. The type embeddings of head entities involved in the triples with the same relation are closer to each other (the same as type embeddings of tail entities). Thus, as for two triples with the same relation, we expect that
\begin{equation}
  \textbf{y}_{h_1,r} = \textbf{y}_{h_2,r}, \ \ \textbf{y}_{t_1,r} = \textbf{y}_{t_2,r}
  \label{eq26}
\end{equation}
where $\textbf{y}_{h_1,r}$ and $\textbf{y}_{h_2,r}$ are type embeddings of head entities while $\textbf{y}_{t_1,r}$ and $\textbf{y}_{t_2,r}$ are type embeddings of tail entities. Particularly, they all focus on the same relation $r$ by the relation-aware projection mechanism of Eq. \ref{eq3}.

Now, considering any two triples $(h_1, r_1, t_1)$ and $(h_2, r_2, t_2)$, we design the energy function for evaluating the dissimilarity of the type embeddings as
\begin{align}
  E_3((h_1, r_1, t_1), (h_2, r_2, t_2)) = &\frac{1}{2}\big(\Vert \textbf{y}_{h_1,r_1} - \textbf{y}_{h_2,r_2} \Vert \nonumber\\
  &+ \Vert \textbf{y}_{t_1,r_1} - \textbf{y}_{t_2,r_2} \Vert\big) \label{eq27}
\end{align}
where $\textbf{y}_{h_1,r_1}$ and $\textbf{y}_{h_2,r_2}$ are two head entity type embeddings, $\textbf{y}_{t_1,r_1}$ and $\textbf{y}_{t_2,r_2}$ are two tail entity type embeddings, and they are all associated with the relation $r_1$ or $r_2$. Therefore, we expect the value derived from Eq. \ref{eq27} tends to be smaller if $r_1$ and $r_2$ are the same relation.

\subsection{Optimization Objective}
\label{section3.5}

The designed entity-specific triples encoder, type-specific triples encoder and type representations similarity constraint could be trained as a unified end-to-end model. We optimize our model according to a three-component objective function:
\begin{equation}
  L = \sum_{(h,r,t)\in S}{\left\{\sum_{(h',r,t')\in S'}{\Big\{L_1 + \alpha_1 L_2\Big\}} + \alpha_2 L_3\right\}}
  \label{eq29}
\end{equation}
in which the overall training objective consists of three components: $L_1$ and $L_2$ are two pair-wise loss functions that correspond to the entity-specific triple encoder and the type-specific triple encoder, respectively, and $L_3$ is a triple loss function for constraining the type embeddings. $\alpha_1$ and $\alpha_2$ denote the weights of $L_2$ and $L_3$ for the tradeoff between the entity-specific triple, the type-specific triple and the type similarity constraint. $S$ contains all the triples in the train set, and $S'$ is the negative sample set generated by replacing the entities in $S$. Specifically, $L_1$, $L_2$ and $L_3$ are defined as
  \begin{align}
    &L_1 = -\log\sigma(\gamma_1-E_1(h,r,t)) \nonumber\\
    & \ \ \ \ \ \ \ \ \ - \log\sigma(E_1(h',r,t') - \gamma_1) \label{eq30} \\
    &L_2 = \max\big[0, E_2(h, r, t) + \gamma_2 - E_2(h', r, t')\big] \label{eq31} \\
    &L_3 = \sum_{(hp,r,tp)\in Y}{\sum_{(hn,r',tn)\in Y'}\max\big[0, E_3((h, r, t),} \nonumber \\
    & \ \ \ \ \ \ \ \ (hp, r, tp))+\gamma_3- E_3((h, r, t), (hn, r', tn))\big] \label{eq32}
  \end{align}
where $\gamma_1$, $\gamma_2$ and $\gamma_3$ denote the fixed margins in $L_1$, $L_2$ and $L_3$, respectively. In specific, $L_3$ can be viewed as the regularization in optimization for restraining the entity type embeddings. $\sigma$ denotes the sigmoid function. max[0,x] is the function to select the larger value between 0 and x. Particularly, in Eq. \ref{eq32}, the triple $(h, r, t)$ is regarded as the anchor instance and $(h_p, r, t_p)$ is a positive instance in the set $Y$ containing other triples correlated to the same relation $r$, while $(h_n, r', t_n)$ is any negative instance in the set $Y'$ containing the other triples without the relation $r$. Besides, we employ self-adversarial sampling as in \cite{RotatE}.

\section{Experiment Results}

In this section, we evaluate our model AutoETER for KG completion on four real-world benchmark datasets. Additionally, we visualize the clustering results of type embeddings for demonstrating the effectiveness of representing types automatically.

\subsection{Experimental Setup}

\subsubsection{Datasets}

We utilize four standard datasets$\footnote{Datasets could be found at onedrive: \url{https://1drv.ms/u/s!Ajh_jEjaTE0SbbceogcmdwSu9ME?e=zfw6sN}}$ for link prediction tasks: FB15K \cite{Bordes:TransE} is a widely used dataset that is a subgraph of the commonsense knowledge graph Freebase. WN18 \cite{Bordes:TransE} is a subset of the lexical knowledge graph WordNet. YAGO3-10 \cite{Dettmers:CNN} is a subset of YAGO. Each of the three datasets consists of all the relation patterns, including symmetry, inversion, composition and complex 1-N, N-1 and N-N of relations. FB15K-237 \cite{FB15K237} is a subset of FB15K and removes all the inverse relations. Table \ref{table1} exhibits the statistics of all the datasets exploited.
\begin{table}\small
\centering
\renewcommand\tabcolsep{3.0pt}
\begin{tabular}{ccccc}
\toprule
Dataset		& WN18		& YAGO3-10	& FB15K	  & FB15K-237\\
\midrule
\#Entity		& 40,943		& 123,182  & 14,951	& 14,505\\
\#Relation	& 18		    & 37	    & 1,345	  & 237\\
\#Train     & 141,442   & 1,079,040  & 483,142 & 272,115\\
\#Valid     & 5,000     & 5,000   & 50,000  & 17,535\\
\#Test      & 5,000     & 5,000  & 59,071  & 20,466\\
\bottomrule
\end{tabular}
\caption{Statistics of datasets used in the experiments.}
\label{table1}
\end{table}

\subsubsection{Evaluation Protocol}

The link prediction task aims to predict when the head or tail entity of a triple in the test set is missing. For link prediction, all the entities in the KG are respectively replaced with the missing entity to generate the candidate triples. Then, on account of each candidate triple $(h,r,t)$, we combine the two perspectives of the entity-specific triple jointly with the type-specific triple to evaluate the plausibility of this candidate triple, and the energy function for evaluation is designed as follows:
\begin{equation}
	E_{pred}(h,r,t) = E_1(h,r,t)+\alpha_1 E_2(h,r,t) \label{eq33}
\end{equation}

The above energy function $E_{pred}(h,r,t)$ is composed of the energy functions $E_1(h,r,t)$ (with regard to the entity-specific triple) and $E_2(h,r,t)$ (with respect to the type-specific triple) defined in Eqs. \ref{eq2} and \ref{eq4}, respectively. $\alpha_1$ is the weight which is the same as in Eq. \ref{eq29} for a trade-off. Then, the scores with respect to all the candidate triples are calculated by Eq. \ref{eq33}. Subsequently, these scores are sorted in ascending order, and further, the correct triple rank can be obtained.

\begin{table*}\scriptsize
\centering
\begin{tabular}{cccccccccccc}
\toprule
    Model & \multicolumn{5}{c}{FB15K}  &  & \multicolumn{5}{c}{WN18}\\
    \cline{2-6} \cline{8-12}
                           & MR      & MRR       & Hits@1       & Hits@3       & Hits@10   & & MR     & MRR      & Hits@1    & Hits@3    & Hits@10 \\
\midrule
TransE \cite{Bordes:TransE}     & -		    & 0.463     & 0.297     & 0.578     & 0.749  & & -      & 0.495    & 0.113  & 0.888  & 0.943 \\
DistMult \cite{Yang:ICLR}       & 42  		& 0.798     & -         & -         & 0.893  & & 655    & 0.797    & -      & -      & 0.946 \\
HolE \cite{HolE}                & -   		& 0.524     & 0.402     & 0.613     & 0.739  & & -      & 0.938    & 0.930  & 0.945  & 0.947 \\
ComplEx \cite{Trouillon:ComplEx}& -   	  & 0.692     & 0.599     & 0.759     & 0.840  & & -      & 0.941    & 0.936  & 0.945  & 0.947 \\
ConvE \cite{Dettmers:CNN}       & 51		  & 0.657     & 0.558     & 0.723     & 0.831  & & 374    & 0.943    & 0.935  & 0.946  & 0.956   \\
RotatE \cite{RotatE}            & \underline{40}		  & \underline{0.797}     & \underline{0.746}     & \underline{0.830}     & \underline{0.884}  & & \underline{309}    & 0.949    & \underline{0.944}  & 0.952   & 0.959  \\
QuatE \cite{QuatE}            & \underline{40}		  & 0.765     & 0.692     & 0.819     & 0.878  & & 393    & \underline{0.950}    & 0.942  & \textbf{0.954}   & 0.959  \\
\midrule
R-GCN \cite{Michael:GNN}        & - 		  & 0.696     & 0.601     & 0.760     & 0.852  & & -      & 0.819    & 0.697  & 0.929  & \textbf{0.964}   \\
PTransE \cite{Lin-b:PTransE}    & 54	    & 0.679     & 0.565     & 0.768     & 0.855  & & 472      & 0.890    & 0.931  & 0.942   & 0.945  \\
TKRL \cite{Xie:TKRL}            & 68	    & -     & -     & -     & 0.694  & & -      & -    & -  & -   & -  \\
TypeComplex \cite{Type-sensitive}    & -	    & 0.753     & 0.677     & -     & 0.869  & & -      & 0.939    & 0.932  & -   & 0.951  \\
\midrule
AutoETER	        & \textbf{33}	& \textbf{0.799}     & \textbf{0.750}  & \textbf{0.833}   & \textbf{0.896}	& & \textbf{174}  & \textbf{0.951}  & \textbf{0.946}   & \textbf{0.954}	& \underline{0.961}  \\
\bottomrule
\end{tabular}
\caption{Evaluation Results on FB15K and WN18. Best results are in \textbf{bold} and second best results are \underline{underlined}.}
\label{table2}
\end{table*}

\begin{table*}\scriptsize
\centering
\begin{tabular}{cccccccccccc}
\toprule
    Model & \multicolumn{5}{c}{FB15K-237}  &  & \multicolumn{5}{c}{YAGO3-10}\\
    \cline{2-6} \cline{8-12}
                           & MR      & MRR       & Hits@1    & Hits@3    & Hits@10   & & MR     & MRR      & Hits@1    & Hits@3    & Hits@10 \\
\midrule
TransE \cite{Bordes:TransE}     & 357		  & 0.294     & -         & -         & 0.465  & & -    & -      & -      &   & - \\
DistMult \cite{Yang:ICLR}       & 254  		& 0.241     & 0.155     & 0.263     & 0.419  & & 5926 & 0.34    & 0.24    & 0.38   & 0.54 \\
ComplEx \cite{Trouillon:ComplEx}& 339   	& 0.247     & 0.158     & 0.275     & 0.428  & & 6531   & 0.36    & 0.26  & 0.40  & 0.55 \\
ConvE \cite{Dettmers:CNN}       & 244		  & 0.325     & 0.237     & 0.356     & 0.501  & & \underline{1671}    & 0.44    & 0.35  & 0.49  & 0.62   \\
RotatE \cite{RotatE}            & 177		  & 0.338     & \underline{0.241}     & \underline{0.375}     & \underline{0.533}  & & 1767    & \underline{0.495}    & \underline{0.402}  & \underline{0.550}   & \underline{0.670}  \\
QuatE \cite{QuatE}            & \underline{172}		  & 0.311     & 0.220     & 0.344     & 0.495  & & -    & -    & -  & -   & -  \\
\midrule
R-GCN \cite{Michael:GNN}        & - 		  & 0.249     & 0.151     & 0.264     & 0.417  & & -       & -    & -  & -  & -   \\
PTransE \cite{Lin-b:PTransE}    & 302		  & \textbf{0.363}     & 0.234     & 0.374     & 0.526  && -    & -  & -  & -   \\
TypeComplex \cite{Type-sensitive}    & -	    & 0.259     & 0.186     & -     & 0.411  & & -      & 0.411    & 0.319  & -   & 0.609  \\
\midrule
AutoETER	 & \textbf{170}	& \underline{0.344}     & \textbf{0.250}  & \textbf{0.382}   & \textbf{0.538}	& & \textbf{1179}  & \textbf{0.550}  & \textbf{0.465}   & \textbf{0.605}	& \textbf{0.699}  \\
\bottomrule
\end{tabular}
\caption{Evaluation Results on FB15K-237 and YAGO3-10 datasets.}
\label{table3}
\end{table*}

Three standard metrics are employed to evaluate the performance of link prediction:\\
1) Mean Rank (MR) of the correct triples.\\
2) Mean Reciprocal Rank (MRR) of the correct triples.\\
3) Hits@n measures the proportion of the correct triples in top-n candidate triples.

We also follow the filtered setting as the previous study \cite{Dettmers:CNN} that evaluates the performance by filtering out the corrupt triples already exist in the KG.

\subsubsection{Baselines and Hyper-parameters}

We compare the developed model AutoETER with two categories of the state-of-the-art baselines: (1) Models only considering entity-specific triples including TransE, DisMult, HolE, ComplEx, ConvE, RotatE and QuatE; (2) Models introducing additional information such as TKRL with explicit types and the type-sensitive model TypeComplex, R-GCN with graph structure and PTransE with paths. All the baselines are selected because they achieve good performance and provide source codes for ensuring the reliability and reproducibility of the results. The results of R-GCN are from \cite{QuatE}. The results of TKRL are from \cite{Xie:TKRL}. The results of PTransE$\footnote{\url{https://github.com/thunlp/KB2E/tree/master/PTransE}}$, TypeComplex$\footnote{\url{https://github.com/dair-iitd/KBI/tree/master/kbi-pytorch}}$ and QuatE $\footnote{\url{https://github.com/cheungdaven/QuatE}}$ are obtained by using their source codes. The other results of the baselines are from \cite{RotatE}.

We tune our model utilizing a grid search to select the optimal hyper-parameters. The optimal configurations are provided as: the batch size is set as 1024, the learning rate is $lr=0.0001$, and the weights in optimization are $\alpha_1=0.1, \alpha_2=0.5$. The dimension of the entity and relation embeddings in entity-specific triples is $k=1000$, the dimension of the type and relation embeddings in type-specific triples is $d=200$. For datasets FB15K and YAGO3-10, the three fixed margins are set as $\gamma_1=22$, $\gamma_2=8$, $\gamma_3=6$. For datasets WN18 and FB15K-237, $\gamma_1=10$, $\gamma_2=6$, $\gamma_3=3$.

\begin{figure*}
  \centering
	\includegraphics[scale=0.52]{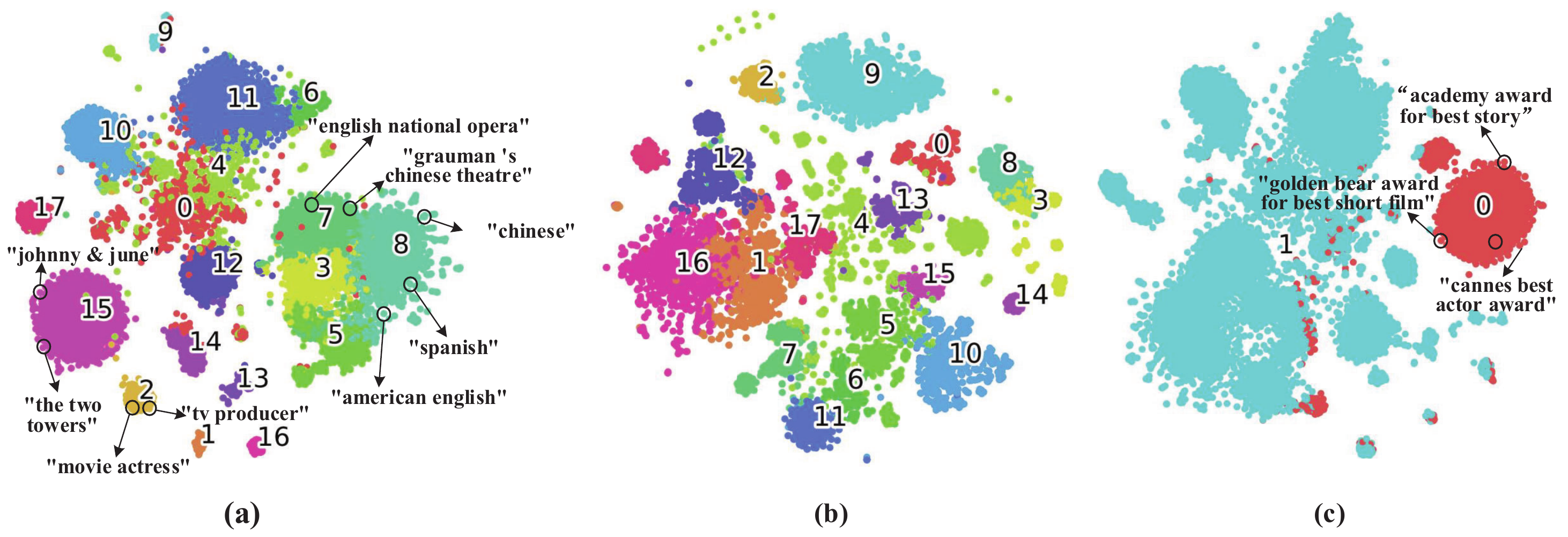}
	\caption{The visualization of type embeddings clustering on FB15K-237. (a) The clustering of the original type embeddings. (b) The clustering of the entity embeddings. (c) The clustering of the type embeddings all focusing on the relation $/award/award\_category/nominated\_for$.}
	\label{figure3}
\end{figure*}

\subsection{Evaluation Results and Analyses}

\begin{table*}\small
\centering
\begin{tabular}{cccccccccc}
\toprule
	Model  & \multicolumn{4}{c}{Head Entity Prediction (Hits@10)}	& & \multicolumn{4}{c}{Tail Entity Prediction (Hits@10)}\\
\cline{2-5} \cline{7-10}
	 & 1-1	& 1-N	& N-1	& N-N	& & 1-1	& 1-N	& N-1	& N-N\\
\midrule
TransE \cite{Bordes:TransE}	& 0.437	& 0.657	& 0.182	& 0.472	& & 0.437	& 0.197	& 0.667	& 0.500\\
TransH \cite{Wang:TransH}   & 0.668	& 0.876	& 0.287	& 0.645	& & 0.655	& 0.398	& 0.833	& 0.672\\
TransR	\cite{Lin-a:TransR} & 0.788	& 0.892	& 0.341	& 0.692	& & 0.792	& 0.374	& 0.904	& 0.721\\
RotatE \cite{RotatE}	  & \underline{0.922}	& \underline{0.967}	& 0.602	& \underline{0.893}	& & \underline{0.923}	& 0.713	& \underline{0.961}	& \underline{0.922}\\
\midrule
PTransE \cite{Lin-b:PTransE}	  & 0.910	& 0.928	& \underline{0.609}	& 0.838	& & 0.912	& \textbf{0.740}	& 0.889	& 0.864\\
\midrule
AutoETER	& \textbf{0.933}	& \textbf{0.979}	& \textbf{0.618}	& \textbf{0.903}	& & \textbf{0.931}	& \underline{0.717}	& \textbf{0.968}	& \textbf{0.927}\\
\bottomrule
\end{tabular}
\caption{Evaluation results on FB15K by mapping properties of relations.}
\label{table4}
\end{table*}

Table \ref{table2} and Table \ref{table3} report the evaluation results of link prediction on the four datasets. We can observe that our model AutoETER outperforms all the baselines, including the state-of-the-art models RotatE and QuatE. These results demonstrate the superiority of modeling and inferring all the relation patterns and the complex relations by our model. Specifically, AutoETER performs better than the type-embodied models TKRL and TypeComplex, emphasizing the type representations learned automatically with relation-aware projection by AutoETER are more effective for inference than totally leveraging the explicit types or ignoring the diversity of type embeddings focusing on various relations. Furthermore, AutoETER outperforms RotatE because AutoETER could infer the complex relations of 1-N, N-1 as well as N-N and takes advantage of type representations. These results all illustrate the type representations learned from KGs are available to predict entities more accurately by restricting the candidate entities with type embeddings.

In view of more diverse relations existed in FB15K compared with the other three datasets, we select FB15K to evaluate link prediction performance by mapping 1-1, 1-N, N-1 and N-N relations. The results are shown in Table \ref{table4}. Our model achieves better performance on both head entity prediction and tail entity prediction than other baselines particularly RotatE, which illustrates the superiority of capturing various representations of entities specific to different relations with the relation-aware projection mechanism to represent entity types.

\subsection{Ablation Study}

We conduct the ablation study of our model on dataset FB15K-237 when we only omit the type similarity constraint (-TSC) and omit the type representation (-TR) from our model. Table \ref{table5} demonstrates that our model performs better than the two ablated models. It illustrates the type representation and the type similarity constraint both significantly impact the performance of link prediction and suggests that our automatically learned type representations play a pivotal role in our approach.

\begin{table}\small
\centering
\begin{tabular}{c|ccccc}
\toprule
    Model                       & MR      & MRR       & H@1    & H@3    & H@10   \\
\midrule
AutoETER	                      & \textbf{170}	    & \textbf{0.344}     & \textbf{0.250}  & \textbf{0.382}   & \textbf{0.538}	\\
-TSC                            & 175		  & 0.342     & 0.246  & 0.379   & 0.536  \\
-TR                             & 177		  & 0.340     & 0.244  & 0.377   & 0.534  \\
\bottomrule
\end{tabular}
\caption{Ablation study on FB15K-237. ``H@'' is the abbreviation of ``Hits@''.}
\label{table5}
\end{table}

\subsection{Visualization of Clustering Entity Type Representations}

We utilize Kmeans to cluster the type embeddings and further employ t-SNE to implement dimensionality reduction for 2d visualization. As Figure \ref{figure3}(a) shows, some type embeddings are clustered into independent categories, while some clusters stay close to each other because these entities share many common types. For instance, $johnny\&june$ and $the\ two\ towers$ are clustered into the same category which actually represents the type $movie$ as we know. Figure \ref{figure3}(b) shows the clustering of the entity embeddings. It can be clearly observed that entity type clustering has better compactness than entity clustering, which demonstrates that entity type embeddings could reflect the characteristics of types. The type embeddings focusing on relation $/award/award\_category/nominated\_for$ are visualized in Figure \ref{figure3}(c). It is evident that some type embeddings representing the type $award$ such as $academy\ award\ for\ best\ story$ and $cannes\ best\ actor\ award$ are clustered into the same category while others stay far away. These visualization results explain the effectiveness of our type embeddings learned automatically with relation-aware projection from the KG.

\section{Conclusion and Future Work}

In this paper, we propose an AutoETER framework to learn type representations for enriching KG embedding automatically. We introduce two classes of encoders to learn the entity-specific triple and type-specific triple embeddings, which could model and infer all the relation patterns of symmetry, inversion and composition as well as the complex 1-N, N-1 and N-N relations. We also constrain the type embeddings by the type similarity. Our experiments on four real-world datasets for link prediction illustrate the superiority of our model and the visualization of the type embeddings clustering verifies the availability of representing types automatically. In future work, we intend to extend our approach to obtain the better type representations incorporating the supervision of ontologies.

\section*{Acknowledgments}

This work was partially supported by the National Natural Science Foundation of China (No. 61772054, 62072022), and the NSFC Key Project (No. 61632001) and the Fundamental Research Funds for the Central Universities.

\bibliography{anthology,emnlp2020}
\bibliographystyle{acl_natbib}

\end{document}